\documentclass{article} 
\usepackage[preprint]{colm2026_conference}

\usepackage{microtype}
\usepackage{hyperref}
\usepackage{url}
\usepackage{booktabs}

\usepackage{pifont}
\usepackage{graphicx}
\usepackage{subcaption}
\usepackage{wrapfig}
\usepackage{multirow}
\usepackage[most]{tcolorbox}
\usepackage[dvipsnames]{xcolor}

\usepackage{amsmath}
\usepackage{amssymb}
\usepackage{mathtools}
\usepackage{amsthm}
\usepackage{enumitem}

\usepackage{multirow}

\usepackage[capitalize,noabbrev]{cleveref}

\theoremstyle{plain}
\newtheorem{theorem}{Theorem}[section]
\newtheorem{proposition}[theorem]{Proposition}

\newtheorem{corollary}[theorem]{Corollary}
\theoremstyle{definition}

\theoremstyle{remark}
\newtheorem{remark}[theorem]{Remark}

\usepackage{tikz}
\usetikzlibrary{shapes,arrows,positioning,fit,backgrounds}

\usepackage[textsize=tiny]{todonotes}

\usepackage{lineno}

\definecolor{darkblue}{rgb}{0, 0, 0.5}
\hypersetup{colorlinks=true, citecolor=darkblue, linkcolor=darkblue, urlcolor=darkblue}

\definecolor{caseorange}{RGB}{199,124,58}
\definecolor{caselight}{RGB}{252,245,238}
\definecolor{answergreen}{RGB}{60,150,90}

\tcbset{
  casestudy/.style={
    enhanced,
    breakable,
    colframe=caseorange,
    colback=white,
    boxrule=0.8pt,
    arc=2pt,
    left=6pt,
    right=6pt,
    top=6pt,
    bottom=6pt,
    fonttitle=\bfseries,
    coltitle=white,
    colbacktitle=caseorange,
    attach boxed title to top left={
      yshift=-2mm,
      xshift=3mm
    },
    boxed title style={
      boxrule=0pt,
      arc=2pt,
      left=6pt,
      right=6pt,
      top=2pt,
      bottom=2pt
    }
  }
}

\title{Learning to Communicate: Toward End-to-End Optimization of Multi-Agent Language Systems}

\author{
 \textbf{Ye Yu\textsuperscript{1}}, 
 \textbf{Heming Liu\textsuperscript{1}},
 \textbf{Haibo Jin\textsuperscript{1}}, 
 \textbf{Xiaopeng Yuan\textsuperscript{1}}, 
 \textbf{Peng Kuang\textsuperscript{1}}, 
 \textbf{Haohan Wang\textsuperscript{1}\thanks{Corresponding author: haohanw@illinois.edu}}
\\
 \textsuperscript{1} University of Illinois Urbana-Champaign, IL, USA 
}

\setlist[itemize]{noitemsep,leftmargin=*,topsep=0.3ex}
\setlist[enumerate]{noitemsep,leftmargin=*,topsep=0.3ex}

\begin{document}

\ifcolmsubmission
\linenumbers
\fi

\maketitle

\begin{abstract}
Multi-agent systems built on large language models have shown strong performance on complex reasoning tasks, yet most work focuses on agent roles and orchestration while treating inter-agent communication as a fixed interface. Latent communication through internal representations such as key–value caches offers a promising alternative to text-based protocols, but existing approaches do not jointly optimize communication with multi-agent reasoning. Therefore we propose DiffMAS, a training framework that treats latent communication as a learnable component of multi-agent systems. DiffMAS performs parameter-efficient supervised training over multi-agent latent trajectories, enabling agents to jointly learn how information should be encoded and interpreted across interactions. Experiments on mathematical reasoning, scientific QA, code generation, and commonsense benchmarks show that DiffMAS consistently improves reasoning accuracy and decoding stability over single-agent inference, text-based multi-agent systems, and prior latent communication 
methods, achieving +26.7\% on AIME24, 
+20.2\% on GPQA-Diamond, and consistent gains across reasoning benchmarks.



\end{abstract}

\section{Introduction}

Multi-agent systems (MAS) built on large language models have emerged as a powerful paradigm for solving complex reasoning tasks, including mathematics, programming, and commonsense problem solving~\citep{zhang2025multiagentarchitecturesearchagentic, zhang2025aflowautomatingagenticworkflow}. Their effectiveness is commonly attributed to two complementary factors: \textit{(i) system structure and agent role assignment}~\citep{chen2024autoagentsframeworkautomaticagent}, which determine how agents are organized and specialized, and \textit{(ii) inter-agent communication}~\citep{chen2025optimaoptimizingeffectivenessefficiency, shen2025understandinginformationpropagationeffects}, which governs how intermediate reasoning states and information are exchanged. By decomposing complex tasks~\citep{wu2023autogenenablingnextgenllm, hong2024metagptmetaprogrammingmultiagent, qian2024chatdevcommunicativeagentssoftware, li2023camelcommunicativeagentsmind}, multi-agent systems consistently outperform single-agent baselines.

As multi-agent language systems continue to mature, recent research has increasingly explored how to systematically improve their performance through optimization~\citep{Wang_2024, zhang2025multiagentarchitecturesearchagentic, zhang2025aflowautomatingagenticworkflow, chen2024autoagentsframeworkautomaticagent}. Existing efforts have primarily focused on improving agentic capabilities through fine-tuning or prompting, as well as refining workflow design through better role assignment, planning strategies, and execution structures~\citep{shinn2023reflexionlanguageagentsverbal, yao2023reactsynergizingreasoningacting,zhang2025aflowautomatingagenticworkflow, chen2024autoagentsframeworkautomaticagent}. These advances have led to more capable agents and more effective system structures. 

\begin{figure*}[t]
    \centering
    \includegraphics[width=\linewidth]{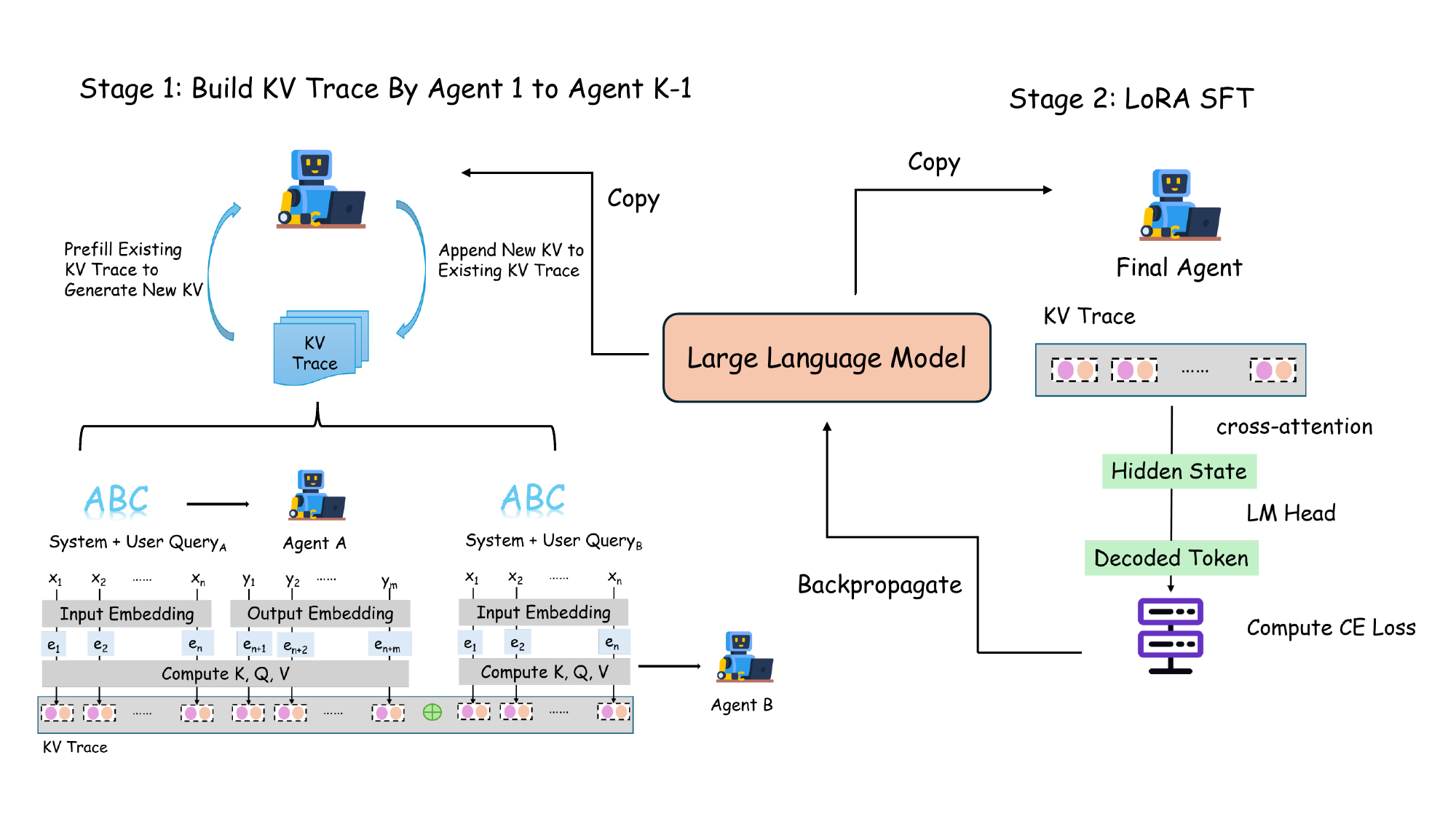}
\vspace{-25pt}
\captionsetup{font=footnotesize}
\caption{In Stage I, agents 1 to K--1 sequentially construct a shared KV trace by prefilling the existing cache and appending newly generated KV segments without gradient updates. The accumulated KV trace serves as a latent communication medium across agents. In Stage II, the final agent performs autoregressive decoding on the prefilled KV cache. Cross-attention over the KV trace produces hidden states, which are projected through the LM head to generate tokens. Supervised fine-tuning is applied using cross-entropy loss, and gradients are backpropagated to update only the LoRA parameters of the final agent while keeping the backbone model frozen.
}
\label{fig:method_overview}
\end{figure*}

However, such improvements largely treat multi-agent systems as modular compositions of optimized components, rather than systems whose interactions can themselves be learned~\citep{zhuge2024languageagentsoptimizablegraphs,shen2025understandinginformationpropagationeffects, chen2025optimaoptimizingeffectivenessefficiency}. A natural next step is therefore to explore whether multi-agent systems can be optimized more holistically, allowing not only agent behaviors and workflows, but also their interactions to improve through learning~\citep{motwani2025maltimprovingreasoningmultiagent, ma2025coevolvingyoufinetuningllm}. Achieving this vision requires moving beyond fixed system interfaces toward interaction mechanisms that can be optimized together with agentic reasoning. As different tasks can place different demands on how intermediate reasoning states should be represented and preserved, effective communication strategies need to adapt to the task rather than relying on fixed protocols~\citep{chen2025optimaoptimizingeffectivenessefficiency,shen2025understandinginformationpropagationeffects}. In this context, enabling communication to become a learnable component represents a critical step towards more fully optimizable multi-agent reasoning systems.

A key obstacle to making communication learnable lies in the discrete nature of how agents currently exchange information. In most existing multi-agent systems, intermediate reasoning results must be decoded into natural language before being transmitted to downstream agents~\citep{chen2023agentverse, hong2024metagptmetaprogrammingmultiagent, chen2023agentverse}. While this design aligns naturally with the token-based interfaces of language models, it introduces a fundamental limitation: discrete message passing creates optimization boundaries between agents~\citep{sukhbaatar2016learningmultiagentcommunicationbackpropagation, foerster2016learningcommunicatedeepmultiagent, mordatch2018emergencegroundedcompositionallanguage, havrylov2017emergencelanguagemultiagentgames}. This limitation is particularly restrictive as the performance of the system depends critically on how well intermediate reasoning results are preserved and interpreted across agents~\citep{deng2023implicitchainthoughtreasoning, wei2023chainofthoughtpromptingelicitsreasoning}. More fundamentally, this reflects a mismatch between the continuous internal representations used during reasoning and the discrete messages used for inter-agent exchange~\citep{zelikman2024quietstarlanguagemodelsteach, hao2025traininglargelanguagemodels, fu2025cachetocachedirectsemanticcommunication, zou2025latentcollaborationmultiagentsystems}. To make communication truly learnable and enable more end-to-end optimization of multi-agent systems, inter-agent information exchange must therefore operate over continuous representations that can support differentiable information flow.


Key--Value (KV) caches provide a natural mechanism for enabling such differentiable communication~\citep{vaswani2023attentionneed}. As continuous internal representations produced by transformer attention layers, KV states can be directly reused by downstream agents without requiring discrete decoding or re-encoding~\citep{kwon2023efficientmemorymanagementlarge}. This allows intermediate reasoning representations to be transmitted through a continuous channel that remains compatible with gradient-based learning. By treating KV caches as a latent communication interface, gradients can propagate not only within individual agents but also across agent boundaries, allowing communication behaviors to be optimized together with the underlying model~\citep{motwani2025maltimprovingreasoningmultiagent, zhuge2024languageagentsoptimizablegraphs, ma2025coevolvingyoufinetuningllm}.

Therefore, we propose DiffMAS, a framework that treats KV-mediated latent communication as a trainable interface between agents and enables SFT-like training over multi-agent interaction trajectories. As illustrated in Figure~\ref{fig:method_overview}, DiffMAS operates in two stages. In Stage I, upstream agents sequentially construct a shared KV trace by prefilling existing KV caches and appending newly generated KV segments, forming a continuous latent communication channel. In Stage II, the final agent performs autoregressive decoding conditioned on the accumulated KV trace, and supervised training is applied to optimize communication jointly with downstream reasoning. By training over full multi-agent interaction trajectories, the framework enables the model to learn communication with agentic reasoning. Our contributions are summarized as follows:

\begin{itemize}
\item We formulate inter-agent communication as an optimization problem in multi-agent language systems, showing that treating communication as a fixed interface prevents end-to-end improvement of how information is transmitted across agents.
\item We propose DiffMAS, a supervised training framework that treats KV-mediated latent communication as a differentiable and task-adaptive interface between agents.
\item We demonstrate through extensive experiments that learning latent communication leads to more stable reasoning trajectories and improves performance, yielding up to +26.7\% improvement on AIME24 and consistent gains across reasoning benchmarks.
\end{itemize}
\section{Related Works}
\subsection{Multi-Agent LLM Systems}
LLM-based multi-agent systems improve complex reasoning by decomposing problems across specialized agents and structured interaction workflows. Prior work has explored improving agent capabilities and system design through role engineering, automated workflow construction, and architecture search \citep{wu2023autogenenablingnextgenllm, li2023camelcommunicativeagentsmind, qian2024chatdevcommunicativeagentssoftware, hong2024metagptmetaprogrammingmultiagent, chen2024autoagentsframeworkautomaticagent, zhang2025aflowautomatingagenticworkflow, zhang2025multiagentarchitecturesearchagentic}. Other studies focus on improving coordination through execution policies or self-configuration\citep{chen2025optimaoptimizingeffectivenessefficiency, wang2025mas, motwani2025maltimprovingreasoningmultiagent, ma2025coevolvingyoufinetuningllm}.

Despite these advances, most multi-agent systems rely on natural language as a universal communication interface~\citep{zhang2025aflowautomatingagenticworkflow, chen2023agentverse}. While flexible, text-based communication requires internal reasoning states to be serialized into discrete tokens\citep{wei2023chainofthoughtpromptingelicitsreasoning}, which can limit information fidelity and prevent communication from being optimized jointly with agent reasoning~\citep{zelikman2024quietstarlanguagemodelsteach, zhuge2024languageagentsoptimizablegraphs, ma2025coevolvingyoufinetuningllm}. These limitations raises the question whether richer internal representations could provide better communication mechanisms.

\subsection{Latent Reasoning}
In parallel, a growing body of work investigates latent or implicit reasoning in large language models, aiming to reduce reliance on explicit textual rationales. Implicit Chain-of-Thought distillation \citep{deng2023implicitchainthoughtreasoning} shows that reasoning behaviors can be internalized into hidden representations, while approaches such as Quiet-STaR \citep{zelikman2024quietstarlanguagemodelsteach} and continuous latent reasoning frameworks \citep{hao2025traininglargelanguagemodels, zhu2025surveylatentreasoning} demonstrate that models can reason effectively without emitting explicit thought tokens.

More recently, these ideas have been extended to multi-agent settings. Several works explore sharing internal representations—such as hidden states or KV caches—between agents to enable richer information exchange than text-based protocols \citep{zou2025latentcollaborationmultiagentsystems, fu2025cachetocachedirectsemanticcommunication, du2026enablingagentscommunicateentirely, fu2026cachetocachedirectsemanticcommunication}. Related studies analyze thought-level communication and aggregation across agents \citep{zheng2025thoughtcommunicationmultiagentcollaboration, feinashley2025mixturethoughtslearningaggregate}. These approaches demonstrate that latent communication can improve efficiency and expressivity by preserving fine-grained intermediate reasoning signals. However, existing latent communication methods either rely on training-free latent exchange\citep{zou2025latentcollaborationmultiagentsystems} or learn representation alignment across models \citep{fu2025cachetocachedirectsemanticcommunication}. In contrast, we treat \textbf{communication itself as a learnable component and optimize it jointly with multi-agent reasoning.}

\section{Method}
\label{sec:method}

\subsection{Problem Setting and Operator View}
\label{sec:method:setting}


We propose DiffMAS, a framework for learning inter-agent communication in multi-agent language systems.We consider a system composed of $K$ sequential agents that collaborate through intermediate reasoning steps. Unlike conventional MAS that treats communication as a fixed protocol, DiffMAS treats communication itself as part of the learnable computation.

Intuitively, this design provides several advantages. First, continuous latent communication allows gradients to propagate across agents, enabling communication strategies to be optimized end-to-end. Second, preserving intermediate representations avoids premature compression into discrete tokens, allowing richer reasoning signals to be transmitted. Third, by constructing communication as part of the computation graph, DiffMAS enables the system to learn how information should be encoded and interpreted across agents.

We now formalize this framework and study its structural properties.

\textbf{From MAS to latent trace representation.}
Each agent produces intermediate latent representations consumed by later agents. Concretely, these are segments of key--value (KV) states generated during autoregressive decoding. We model communication as a sequence of latent blocks, where each stage appends a fixed number of blocks to a shared \emph{latent trace}. This trace is the sole communication medium and is preserved without overwriting.

\textbf{Latent block and trace spaces.}
Let $\mathcal{Z}\subseteq\mathbb{R}^d$ denote the latent block space, where each block $z_t \in \mathcal{Z}$ is a $d$-dimensional vector.
Each stage emits $T$ latent blocks. After $j$ stages (i.e., after $j$ agents have executed sequentially), the total number of blocks is $N_j \triangleq jT$.
Thus, the stage-$j$ trace space is $\mathcal{T}_j = \mathcal{Z}^{N_j}$, and $\mathcal{T}_0 \triangleq \{\mathbf{Z}_{1:0}\}$.
We write the accumulated latent trace as $\mathbf{Z}_{1:N_j} = (z_1,\dots,z_{N_j}) \in \mathcal{T}_j$.


\textbf{Operator view.}
We formalize the multi-agent system as a composition of $K$ differentiable stage operators, each mapping the accumulated latent trace to an extended trace by incorporating new latent information. This abstraction allows us to treat the entire multi-agent reasoning process as a single differentiable computation.

\textbf{Stage operators.}
For each stage $j \in \{1,\dots,K\}$, let $A_\theta^{(j)}(\cdot; x, p_j): \mathcal{T}_{j-1} \to \mathcal{T}_j$ denote a differentiable stage operator, where $x \in \mathcal{X}$ is the input and $p_j$ is the stage-specific prompt, parameterized by $\theta$, such that 
\begin{equation}
\mathbf{Z}_{1:N_j} \;=\; A_\theta^{(j)}\!\big(\mathbf{Z}_{1:N_{j-1}};\, x,p_j\big).
\label{eq:stage-operator}
\end{equation}

The full trajectory is $\mathbf{Z}_{1:N_K}=(A_\theta^{(K)}\circ \cdots \circ A_\theta^{(1)})(\mathbf{Z}_{1:0})$.

\subsection{Within-Stage Micro-Dynamics and Trace Emission}
\label{sec:method:micro}

Each stage $j$ executes $T$ micro-steps of differentiable computation.
Let $s_t^{(j)}\in\mathbb{R}^m$ denote the internal state at micro-step $t$ within stage $j$.

\textbf{Initialization.}
The internal state is initialized as $s_0^{(j)}=\eta_\theta(x,p_j,\mathbf{Z}_{1:N_{j-1}})$.

\textbf{Emission and update order.}
For each micro-step $t\in\{1,\dots,T\}$, the stage first emits a latent block
\begin{equation}
z_t^{(j)}=
g_\theta\!\Big(s_{t-1}^{(j)},x,p_j,\mathbf{Z}_{1:N_{j-1}},\mathbf{Z}^{(j)}_{1:(t-1)}\Big),
\label{eq:emit}
\end{equation}

which is appended to the trace. The state then evolves via a residual update:
\begin{equation}
s_t^{(j)} = s_{t-1}^{(j)} + f_\theta\!\Big(s_{t-1}^{(j)},x,p_j,\mathbf{Z}_{1:N_{j-1}},\mathbf{Z}^{(j)}_{1:t}\Big).
\label{eq:update1}
\end{equation}

All maps $\eta_\theta$, $g_\theta$, and $f_\theta$ are differentiable in $\theta$.

\textbf{Relation to the stage operator.}
The operator $A_\theta^{(j)}$ is the macro-map induced by the initialization, emission, and update rules above.

\subsection{Training Objective and End-to-End Differentiability}
\label{sec:method:training}


After $K$ stages, a decoder produces the output distribution
$p_\theta\big(y|x,\{p_j\}_{j=1}^K\big)=\mathrm{Dec}_\theta(x,p_K,\mathbf{Z}_{1:N_K})$. Training minimizes the supervised negative log-likelihood
$\mathcal{L}(\theta)=-\log p_\theta\!\big(y^\star \mid x,\{p_j\}_{j=1}^K\big)$. Since  $\mathbf{Z}_{1:N_K}$ is obtained via compositions of differentiable operators, gradients of $\mathcal{L}$ propagate through all stages and micro-steps with respect to $\theta$.

\textbf{Practical instantiation.}
In our implementation, $\eta_\theta$, $g_\theta$, and $f_\theta$ are instantiated with a shared pretrained transformer conditioned on stage-specific prompts. Parameter-efficient adaptation (e.g., LoRA) updates only a subset of parameters while keeping the backbone frozen, adapting how latent representations are encoded and interpreted across stages.

\subsection{Structural Implications of Non-Overwriting Communication}
\label{sec:method:implications}

We compare DiffMAS with overwriting communication at the interface level. In overwriting systems, computation is transmitted through a fixed-dimensional carrier $h_j\in\mathbb{R}^{d_h}$ with update $h_{j+1}=F_\theta(h_j,x,p_{j+1})$, so downstream stages access prior computation only through repeated re-encoding. DiffMAS instead communicates through an explicit latent trace $\mathbf{Z}_{1:N_j}\in\mathcal{T}_j$ by concatenation:
\begin{equation}
\mathbf{Z}_{1:N_j}=[\mathbf{Z}_{1:N_{j-1}};\,\mathbf{Z}^{(j)}_{1:T}], 
\qquad \mathbf{Z}^{(j)}_{1:T}=[z^{(j)}_1,\dots,z^{(j)}_T].
\label{eq:append}
\end{equation}
Hence intermediate segments remain explicit in the final trace, while the overwriting carrier has fixed dimension $d_h$ and the ambient dimension of $\mathbf{Z}_{1:N_j}$ grows as $N_j d$.


\begin{proposition}[Interface-induced gradient structure]
\label{prop:gradient}
Consider two $K$-stage systems.
\textbf{(Overwriting.)} Let $h_{j+1}=F_\theta(h_j,x,p_{j+1})$ with $h_j\in\mathbb{R}^{d_h}$. 
Assume $\|\partial h_{j+1}/\partial h_j\|_2 \leq \rho$ for some $\rho\in[0,1)$ and all $j$. 
Then for any $j < K$,
\begin{equation}
    \left\|\frac{\partial \mathcal{L}}{\partial h_j}\right\|_2 
    \;\leq\; \rho^{K-j} \left\|\frac{\partial \mathcal{L}}{\partial h_K}\right\|_2.
    \label{eq:overwrite-decay}
\end{equation}
\textbf{(Concatenation.)} Let $\mathbf{Z}_{1:N_K}=[\mathbf{Z}^{(1)}_{1:T};\dots;\mathbf{Z}^{(K)}_{1:T}]$. 
Then for any $j\in\{1,\dots,K\}$,
\begin{equation}
    \left\|\frac{\partial \mathcal{L}}{\partial \mathbf{Z}^{(j)}_{1:T}}\right\|_2 
    \;\leq\; \left\|\frac{\partial \mathcal{L}}{\partial \mathbf{Z}_{1:N_K}}\right\|_2,
    \label{eq:concat-nodecay}
\end{equation}
introducing no depth-dependent multiplicative factor.
\end{proposition}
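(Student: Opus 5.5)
The plan is to treat the two parts separately. Each rests on an elementary fact about Jacobians: a chain rule with submultiplicativity of the operator norm for the overwriting case, and the fact that a coordinate projection is norm-nonincreasing for the concatenation case.

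\textbf{Overwriting case.} The loss depends on $h_j$ only through the chain $h_j \to h_{j+1} \to \cdots \to h_K$, so I treat $\mathcal{L}$ as a function of $h_K$ alone (any direct dependence on $\theta$, $x$ or the prompts is held fixed). The chain rule then gives, in row-vector convention,
\begin{equation*}
\frac{\partial \mathcal{L}}{\partial h_j} \;=\; \frac{\partial \mathcal{L}}{\partial h_K}\,\frac{\partial h_K}{\partial h_{K-1}}\cdots\frac{\partial h_{j+1}}{\partial h_j}.
\end{equation*}
I would argue by backward induction on $j$. The base case $j=K$ is trivial. For the inductive step, $\|\partial\mathcal{L}/\partial h_j\|_2 \le \|\partial\mathcal{L}/\partial h_{j+1}\|_2\,\|\partial h_{j+1}/\partial h_j\|_2 \le \rho\,\|\partial\mathcal{L}/\partial h_{j+1}\|_2$. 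This uses that the $\ell_2$ norm of $vA$ is at most $\|v\|_2\|A\|_2$, and that the transpose has the same spectral norm. Unrolling the induction gives \eqref{eq:overwrite-decay}. The assumption $\rho<1$ is not needed for the inequality itself; it only makes the bound a genuine decay.

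\textbf{Concatenation case.} By \eqref{eq:append}, the final trace is the direct concatenation of the segments. Write $P_j$ for the coordinate projection that extracts the $j$-th segment, so $\mathbf{Z}^{(j)}_{1:T} = P_j \mathbf{Z}_{1:N_K}$. The key step is to identify the partial derivative of $\mathcal{L}$ with respect to the segment, with all other segments held fixed as coordinates of the final trace, as the restriction of the full gradient:
\begin{equation*}
\frac{\partial \mathcal{L}}{\partial \mathbf{Z}^{(j)}_{1:T}} \;=\; \frac{\partial \mathcal{L}}{\partial \mathbf{Z}_{1:N_K}}\,P_j^{\top}.
\end{equation*}
Since $P_j^\top$ is an isometric embedding, with orthonormal columns and spectral norm $1$, the gradient block is a sub-vector of the full gradient. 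Its Euclidean norm is therefore at most that of the whole vector, which yields \eqref{eq:concat-nodecay} with constant $1$ independent of $K-j$.

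\textbf{Main obstacle.} The delicate point is interpretive rather than computational: in the concatenation case, which derivative is meant. The segment $\mathbf{Z}^{(j)}$ also influences later segments $\mathbf{Z}^{(j')}$, $j'>j$, through the stage operators, so the total derivative would contain additional Jacobian-chain terms. I would state explicitly that the proposition concerns the partial derivative at the interface, meaning the direct path through which the decoder attends to the preserved segment. The claim is that this direct path carries no product of stage Jacobians, in contrast to the overwriting carrier, where every path from $h_j$ to the loss must pass through all $K-j$ maps.

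To make the comparison fair, I would note that the overwriting bound likewise concerns the only path available there. Consequently, the total gradient in the concatenation case equals this undecayed direct term plus indirect terms, and cannot be forced to decay geometrically by the interface itself.
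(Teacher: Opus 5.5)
Your proposal is correct and follows essentially the same argument as the paper. For the overwriting case, both use the chain rule with submultiplicativity of the spectral norm; your backward induction just unrolls the paper's bound on the full Jacobian product. For the concatenation case, both use the norm-one block-insertion Jacobian, which is your $P_j^{\top}$ and the paper's $P_j$. Your explicit remark that the concatenation bound concerns the partial derivative along the direct interface path, rather than the total derivative through later stage operators, spells out what the paper only signals when it says the bound ``isolates the contribution of the communication interface.''
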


\textbf{Implications.}
Proposition~\ref{prop:gradient} shows that DiffMAS avoids the depth-dependent gradient 
attenuation that affects overwriting communication: under the contractive Jacobian 
assumption, the gradient signal reaching stage $j$ in an overwriting system decays 
geometrically as $\rho^{K-j}$, whereas in DiffMAS the communication interface introduces 
no such multiplicative factor for any stage $j$. This means that all intermediate agents 
contribute comparably accessible gradient signal during training, regardless of their 
position in the pipeline. Note that this is an interface-level guarantee: additional 
attenuation may arise from the decoder's attention weights, but no depth-dependent 
factor is introduced by the concatenative interface itself. 
The full proof is in Appendix~\ref{app:structural}.

\textbf{Trade-offs.}
Non-overwriting traces expose more information but enlarge the conditioning object with depth, potentially introducing redundancy or interference. These are structural observations; empirical effects depend on parameterization and training.
\section{Experiements}

\subsection{Experimental Setup}

\begin{table*}[t]
\centering
\footnotesize
\captionsetup{font=footnotesize}
\caption{Accuracy (ACC, \%) comparison across benchmarks. We report absolute
accuracy and improvement over the single-agent baseline. Best results are in bold.}
\resizebox{\linewidth}{!}{%
\begin{tabular}{c c | cc | c | cc}
\toprule
Model & Task 
& \multicolumn{2}{c|}{\shortstack{Text\\Communication}} 
& \shortstack{Training-free\\Latent Communication} 
& \multicolumn{2}{c}{\shortstack{Trained\\Latent Communication}} \\

\cmidrule(lr){3-4} \cmidrule(lr){5-5} \cmidrule(lr){6-7}

& 
& Single & TextMAS 
& LatentMAS 
& C2C & DiffMAS \\

\midrule

\multirow{6}{*}{Qwen3-4B}
& AIME24 & 43.3\% & 46.7\% (+3.4\%) & 50.0\% (+6.7\%) & 0.0\% (-43.3\%) & \textbf{63.3\% (+20.0\%)} \\
& AIME25 & 43.3\% & 43.3\% (+0.0\%) & 50.0\% (+6.7\%) & 0.0\% (-43.3\%) & \textbf{50.0\% (+6.7\%)} \\
& GPQA-Diamond & 42.4\% & 44.9\% (+2.5\%) & 36.4\% (-6.0\%) & 29.8\% (-12.6\%) & \textbf{46.4\% (+4.0\%)} \\
& HumanEval+ & 74.5\% & 76.3\% (+1.8\%) & 78.9\% (+4.4\%) & 33.3\% (-41.2\%) & \textbf{79.8\% (+5.3\%)} \\
& MBPP+ & 63.5\% & 69.3\% (+5.8\%) & 70.6\% (+7.1\%) & 43.9\% (-19.6\%) & \textbf{71.9\% (+8.4\%)} \\
& OpenBookQA & 80.1\% & 81.8\% (+1.7\%) & 77.6\% (-2.5\%) & 79.4\% (-0.7\%) & \textbf{83.2\% (+3.1\%)} \\

\midrule

\multirow{6}{*}{Qwen3-8B}
& AIME24 & 50.0\% & 50.0\% (+0.0\%) & 56.7\% (+6.7\%) & 0.0\% (-50.0\%) & \textbf{76.7\% (+26.7\%)} \\
& AIME25 & 46.7\% & 53.3\% (+6.6\%) & 53.3\% (+6.6\%) & 0.0\% (-46.7\%) & \textbf{56.7\% (+10.0\%)} \\
& GPQA-Diamond & 39.9\% & 43.4\% (+3.5\%) & 45.5\% (+5.6\%) & 41.4\%(+1.5\%) & \textbf{60.1\% (+20.2\%)} \\
& HumanEval+ & 74.5\% & 80.7\% (+6.2\%) & 80.7\% (+6.2\%) & 58.8\%(-15.7\%) & \textbf{81.5\% (+7.0\%)} \\
& MBPP+ & 64.8\% & 69.5\% (+4.7\%) & 74.6\% (+9.8\%) & 57.6\%(-7.2\%) & \textbf{74.8\% (+10.0\%)} \\
& OpenBookQA & 83.6\% & 84.2\% (+0.6\%) & 85.8\% (+2.2\%) & 85.8\%(+2.2\%) & \textbf{85.8\% (+2.2\%)} \\

\bottomrule
\end{tabular}%
}
\label{tab:small_models}
\end{table*}

\textbf{Tasks and Datasets.}
We conduct a comprehensive evaluation of DiffMAS across a diverse set of benchmarks spanning both reasoning-intensive and program synthesis tasks.
Specifically, we evaluate on:
(i) \emph{Math and Science Reasoning}, including AIME 2024~\citep{maxwelljia_aime2024}, AIME 2025~\citep{mathai_aime2025}, and GPQA-Diamond~\citep{rein2024gpqa}, which require multi-step symbolic reasoning and precise numerical or categorical answers;
(ii) \emph{Commonsense Reasoning}, using OpenBookQA~\citep{mihaylov2018suitarmorconductelectricity}, which tests structured reasoning over elementary scientific knowledge; and
(iii) \emph{Code Generation}, including HumanEval-Plus~\citep{liu2023your} and MBPP-Plus~\citep{liu2023your}, which assess functional correctness and generalization in Python program synthesis.

\textbf{Models and Baselines.}
We evaluate DiffMAS across multiple open-source large language models spanning diverse architectures and parameter scales. Our evaluation includes models from the Qwen3 family~\citep{yang2025qwen3} (4B, 8B, and 14B parameters), Ministral-3-8B~\citep{liu2026ministral3}, and DeepSeek-R1-Distill-Qwen-32B~\citep{Guo_2025}, allowing us to assess performance across both mid-scale and large-scale reasoning models.

For baseline comparison, we evaluate DiffMAS against three representative inference paradigms:
(i) \emph{Single-model inference} (Single), where a single language model directly performs standard autoregressive generation;
(ii) \emph{Text-based multi-agent systems} (TextMAS), where multiple agents collaborate through explicit natural language communication; and
(iii) \emph{Training-free latent multi-agent systems} (LatentMAS), where agents communicate through continuous latent representations without additional training; and
(iv) \emph{Cache-to-Cache communication} (C2C), where agents exchange KV-cache representations through learned fusion modules. Note that C2C was originally trained on OpenHermes-2.5~\citep{OpenHermes}, which mainly contains instruction-following data. Detailed inference setup is in Appendix~\ref{app:inference_setup}.

\begin{table*}[t]
\centering
\footnotesize
\captionsetup{font=footnotesize}
\caption{Accuracy (\%) comparison on other large models. We report absolute
accuracy and improvement over the single-agent baseline. Best results are in bold.}
\resizebox{\linewidth}{!}{%
\begin{tabular}{c c | c c c c}
\toprule
Model & Task & Single & TextMAS & LatentMAS & DiffMAS \\
\midrule

\multirow{6}{*}{Mistral3-8B}
& AIME24 & 50.0\% & 26.7\% (-23.3\%) & 33.3\% (-16.7\%) & \textbf{73.3\% (+23.3\%)} \\
& AIME25 & 23.3\% & 23.3\% (+0.0\%) & 33.3\% (+10.0\%) & \textbf{40.0\% (+16.7\%)} \\
& GPQA-Diamond & 47.9\% & 51.0\% (+3.1\%) & 46.4\% (-1.5\%) & \textbf{52.0\% (+4.1\%)} \\
& HumanEval+ & 75.4\% & 76.3\% (+0.9\%) & 74.5\% (-0.9\%) & \textbf{77.2\% (+1.8\%)} \\
& MBPP+ & 69.8\% & 65.3\% (-4.5\%) & 67.4\% (-2.4\%) & \textbf{71.1\% (+1.3\%)} \\
& OpenBookQA & 81.0\% & 80.4\% (-0.6\%) & 82.2\% (+1.2\%) & \textbf{82.2\% (+1.2\%)} \\

\midrule

\multirow{6}{*}{Qwen3-14B}
& AIME24 & 63.3\% & 63.3\% (+0.0\%) & 66.7\% (+3.4\%) & \textbf{76.7\% (+13.4\%)} \\
& AIME25 & 56.7\% & 60.0\% (+3.3\%) & 63.3\% (+6.6\%) & \textbf{66.7\% (+10.0\%)} \\
& GPQA-Diamond & 48.5\% & 51.5\% (+3.0\%) & 52.0\% (+3.5\%) & \textbf{53.0\% (+4.5\%)} \\
& HumanEval+ & 77.2\% & 81.5\% (+4.3\%) & 86.8\% (+9.6\%) & \textbf{87.7\% (+10.5\%)} \\
& MBPP+ & 68.5\% & 72.8\% (+4.3\%) & 75.7\% (+7.2\%) & \textbf{77.2\% (+8.7\%)} \\
& OpenBookQA & 86.6\% & 86.2\% (-0.4\%) & 88.8\% (+2.2\%) & \textbf{89.0\% (+2.4\%)} \\

\midrule

\multirow{6}{*}{\shortstack{DeepSeek-R1-Distill\\Qwen-32B}}
& AIME24 & 66.7\% & 50.0\% (-16.7\%) & 50.0\% (-16.7\%) & \textbf{70.0\% (+3.3\%)} \\
& AIME25 & 50.0\% & 50.0\% (+0.0\%) & 43.3\% (-6.7\%) & \textbf{50.0\% (+0.0\%)} \\
& GPQA-Diamond & 53.3\% & 56.5\% (+3.2\%) & 55.1\% (+1.8\%) & \textbf{57.5\% (+4.2\%)} \\
& HumanEval+ & 80.7\% & 82.4\% (+1.7\%) & 83.3\% (+2.6\%) & \textbf{88.5\% (+7.8\%)} \\
& MBPP+ & 73.8\% & 74.6\% (+0.8\%) & 75.7\% (+1.9\%) & \textbf{78.5\% (+4.7\%)} \\
& OpenBookQA & 91.2\% & 91.2\% (+0.0\%) & 89.6\% (-1.6\%) & \textbf{91.2\% (+0.0\%)} \\

\bottomrule
\end{tabular}%
}
\label{tab:large_models}
\end{table*}

\textbf{Training Setup.}
We perform SFT of DiffMAS on high-performance GPU hardware using parameter-efficient LoRA adaptation. Experiments with Qwen-4B, Qwen-8B, and  Ministral3-8B are conducted on NVIDIA A40 GPUs, while Qwen-14B and DeepSeek-R1-Distill-Qwen-32B are trained on NVIDIA H200 GPUs to accommodate its higher memory and computational requirements. Task-specific LoRA adapters are trained on small subsets of benchmark datasets to specialize the model communication for different reasoning domains. For mathematical and scientific reasoning, we fine-tune on 210 samples from Hendrycks Math~\citep{hendrycksmath2021} for one epoch. For code generation, we train on 50 samples from HumanEval~\citep{chen2021evaluatinglargelanguagemodels} for 10 epochs. For commonsense reasoning, we use 700 samples from CommonsenseQA~\citep{talmor2019commonsenseqaquestionansweringchallenge} for one epoch. We intentionally restrict training small datasets to evaluate whether DiffMAS can learn effective latent communication with minimal supervision. Details of training setup are in Appendix~\ref{app:training_setup}.

\subsection{Main Results}
\label{sec:results}

\begin{wrapfigure}{r}{0.5\linewidth}
    \centering
    \includegraphics[width=\linewidth]{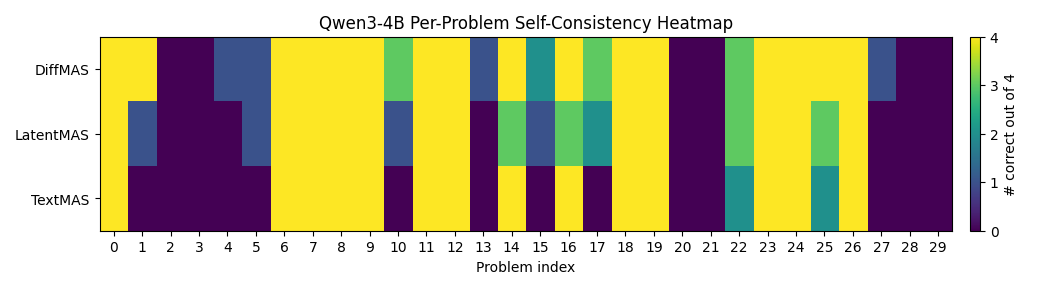}
    \vspace{4pt}
    \includegraphics[width=\linewidth]{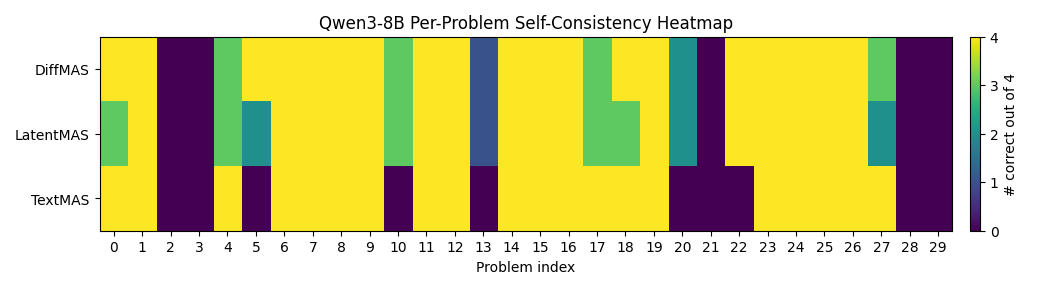}
    \captionsetup{font=footnotesize}
        \vspace{-20pt}
    \caption{Self-consistency analysis on AIME2024 for Qwen3-4B (top) and Qwen3-8B (bottom).}
    \label{fig:self-consistency}
\end{wrapfigure}

\textbf{Overall Performance.}
As shown in Tables~\ref{tab:small_models} and~\ref{tab:large_models}, DiffMAS consistently achieves the best performance across math/science reasoning (AIME24/25, GPQA-Diamond), code generation (HumanEval+, MBPP+), and commonsense reasoning (OpenBookQA). The improvements are especially pronounced at smaller scales, where Qwen3-4B improves from 43.3\% to 63.3\% on AIME24 and reaches the best GPQA-Diamond accuracy of 46.4\%, while Qwen3-8B achieves 76.7\% on AIME24 and 60.1\% on GPQA-Diamond. DiffMAS maintains strong gains at larger scales, including 76.7\% on AIME24 with Qwen3-14B and 70.0\% with DeepSeek-R1-Distill-Qwen-32B. Similar trends hold for code generation, where DiffMAS achieves state-of-the-art results such as 87.7\%/77.2\% (HumanEval+/MBPP+) on Qwen3-14B and 88.5\%/78.5\% at the 32B scale. On commonsense reasoning, DiffMAS consistently achieves the best or tied-best performance, improving OpenBookQA accuracy to 83.2\% on Qwen3-4B and 89.0\% on Qwen3-14B, while matching the best large-model result of 91.2\%. These results demonstrate that DiffMAS delivers consistent improvements across tasks and model scales.



C2C shows relatively weaker performance on difficult reasoning tasks, due to training on OpenHermes-2.5~\citep{OpenHermes}, which focuses on instruction-following data rather than long-horizon reasoning trajectories. This distribution mismatch may limit its effectiveness on math and coding tasks requiring precise multi-step reasoning.
\section{Analysis}
\label{sec:analysis}

\paragraph{Decoding Stability Analysis via Perplexity.}
\label{sec:ppl}
We analyze decoding stability using token-level perplexity (PPL), computed over generated output tokens conditioned on the final agent prompt. Given prompt $x$ and output $y=(y_1,\ldots,y_T)$, we define the average negative log-likelihood $\mathcal{L}(y|x)=-\frac{1}{T}\sum_{t=1}^{T}\log p_\theta(y_t|x,y_{<t})$ and $\mathrm{PPL}(y|x)=\exp(\mathcal{L}(y|x))$. Lower PPL indicates better calibration and more stable decoding. As latent communication modifies the internal attention context, PPL serves as a sensitive measure of how coherently injected KV states are interpreted.

We compare DiffMAS and LatentMAS on AIME~2024 using Qwen3-4B (Figure~\ref{fig:qwen4b_aime2024_ppl}). DiffMAS achieves lower mean perplexity (1.24 vs.\ 1.31) and a tighter distribution, with most values between 1.15--1.30, while LatentMAS exhibits a heavier tail with outliers exceeding 2.0, indicating unstable decoding caused by misaligned latent signals. This pattern also appears at the instance level: while both methods behave similarly on easier problems, LatentMAS shows frequent spikes on harder instances, whereas DiffMAS maintains consistently low perplexity, suggesting that supervised training leads to more structured encoding and interpretation of latent KV representations.

\begin{wrapfigure}{r}{0.4\linewidth}
    \centering
    \includegraphics[width=\linewidth]{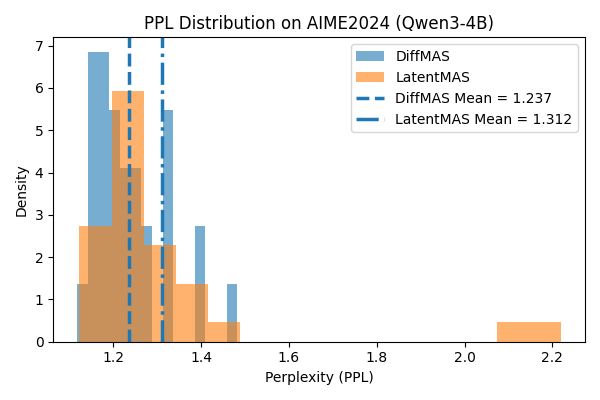}
    \captionsetup{font=footnotesize}
    \caption{Perplexity analysis on AIME2024 for Qwen3-4B, 
    DiffMAS compared to LatentMAS. Density indicates the number of problems falling into each perplexity score category.}
    \label{fig:qwen4b_aime2024_ppl}
\end{wrapfigure}

\paragraph{Self-Consistency in Inference.}
Beyond aggregate accuracy, we analyze the self-consistency of multi-agent reasoning on the AIME 2024 benchmark.
We measure self-consistency by independently sampling each problem four times and recording the number of correct solutions per instance (ranging from 0 to 4).
This metric captures the stability and reliability of the reasoning process under stochastic decoding, which is particularly critical for high-difficulty mathematical tasks.

Figure \ref{fig:self-consistency} shows that DiffMAS consistently achieves higher self-consistency than LatentMAS and TextMAS across both Qwen3-4B and Qwen3-8B. DiffMAS displays a clear shift toward high-consistency outcomes (3–4 correct samples) across most problems, indicating more stable reasoning trajectories. In contrast, LatentMAS frequently produces low-consistency results (0–1 correct samples), suggesting brittle reasoning where small latent communication perturbations lead to divergent outcomes. TextMAS exhibits highly polarized behavior, with many consistent failures (0/4) and only occasional consistent successes (4/4). Overall, these results suggest that DiffMAS improves performance primarily through more stable multi-agent coordination rather than sporadic correct generations, which is particularly important for AIME-style problems where small reasoning errors can invalidate final answers.
\begin{wrapfigure}{r}{0.6\linewidth}
    \centering
    \includegraphics[width=\linewidth]{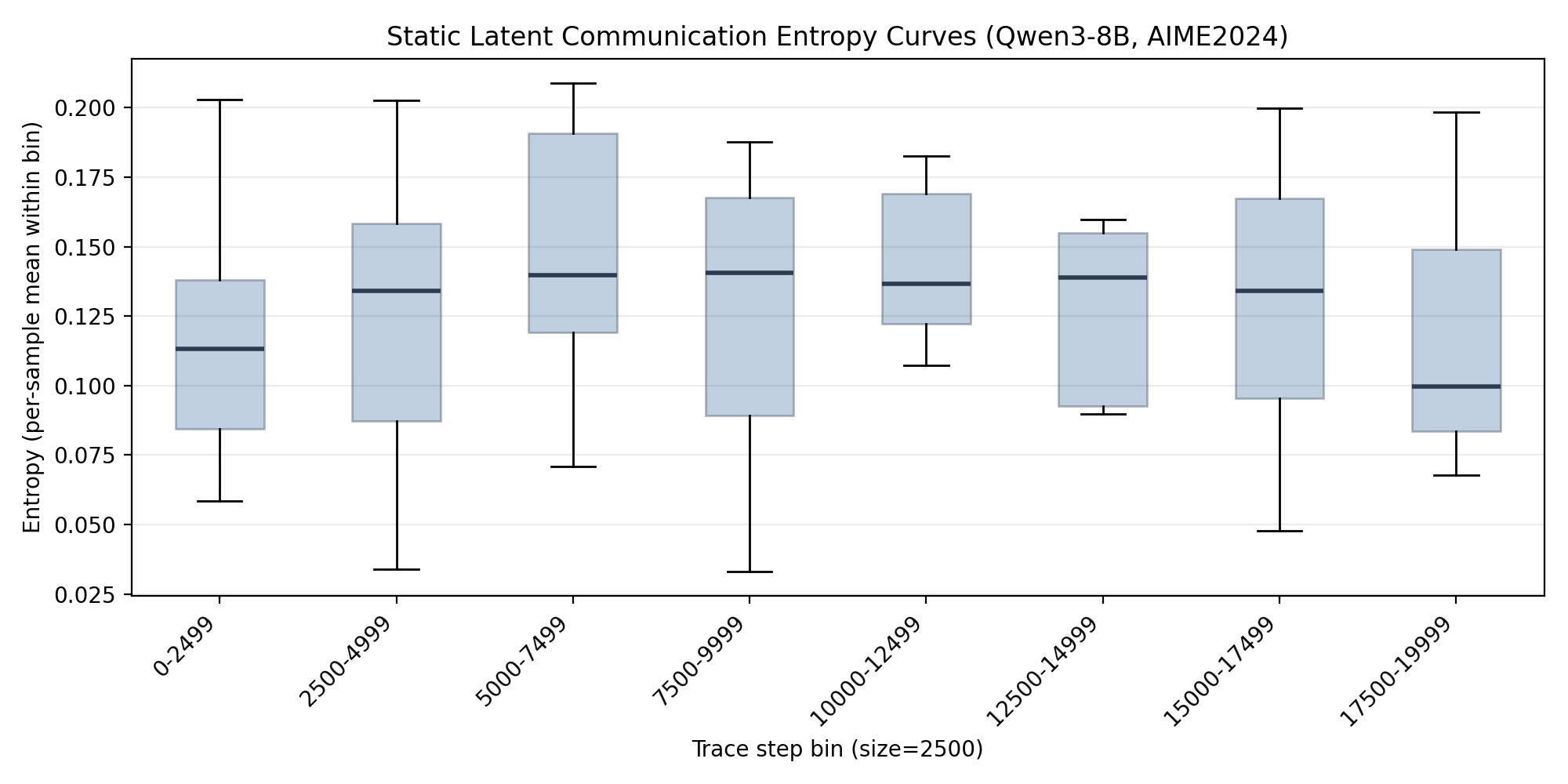}
    \includegraphics[width=\linewidth]{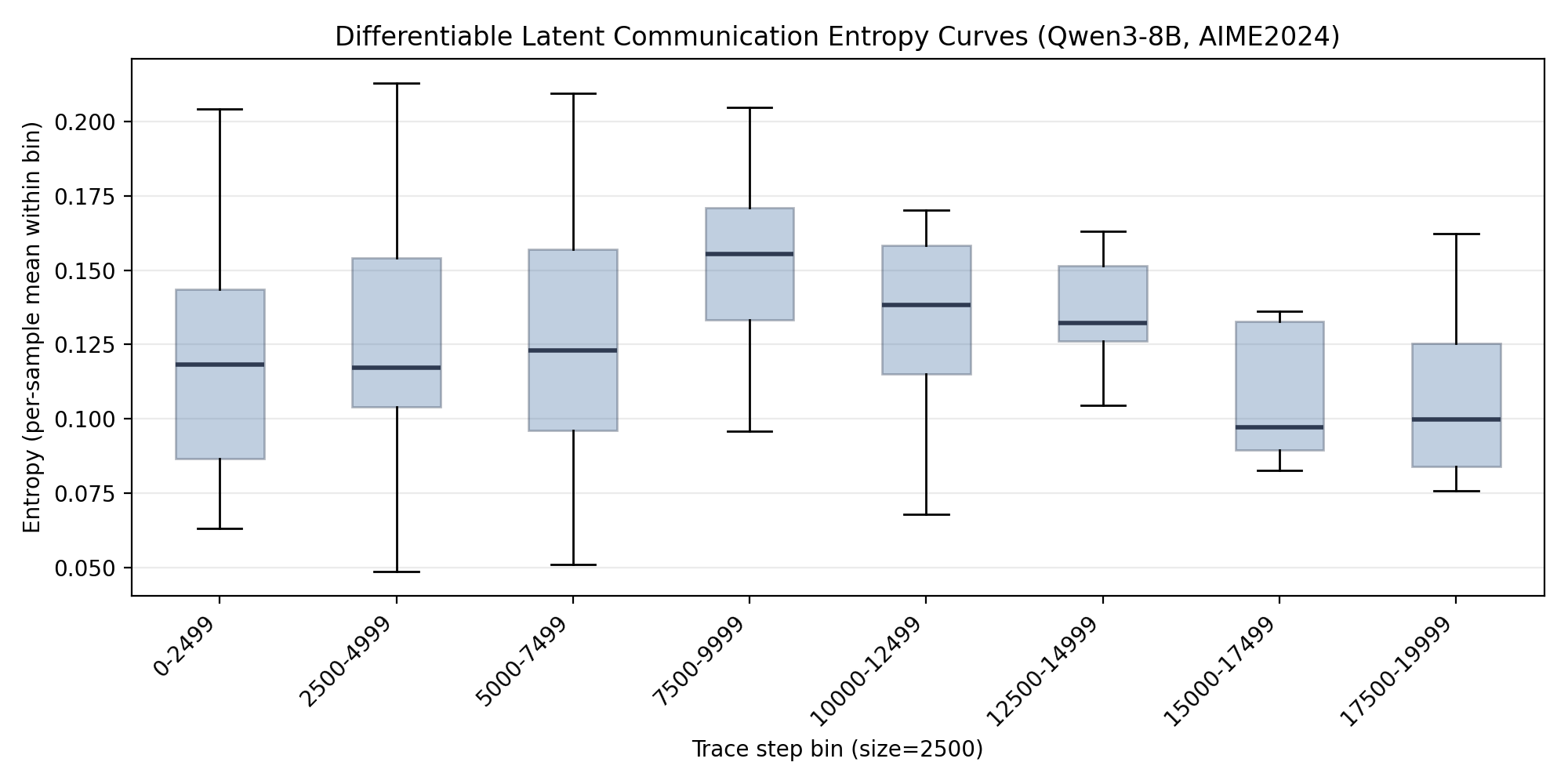}
    \captionsetup{font=footnotesize}
    \caption{
Token-level predictive entropy (top-25) of the judger agent on AIME2024 under LatentMAS (top) and DiffMAS (bottom).
    }
    \label{fig:entropy_judger}
\vspace{-10pt}
\end{wrapfigure}



\paragraph{Token-Level Entropy Dynamics and Stability of Differentiable Latent Communication.}
To study the stability of latent multi-agent communication, we analyze the token-level predictive entropy of the final agent during decoding. At each step, we compute the entropy of the top-25 token distribution averaged over 30 runs, which measures decision uncertainty during reasoning.

Figure~\ref{fig:entropy_judger} shows a structural difference between differentiable latent communication and static latent injection. While entropy increases during decoding in both cases, static latent injection exhibits higher variance and frequent entropy spikes, whereas differentiable communication produces smoother entropy growth with lower peaks.

This difference arises from gradient coupling: in DiffMAS, cross-agent latent states remain within a shared computation graph and are jointly optimized, allowing upstream representations to adapt and reduce downstream uncertainty. In contrast, static latent injection lacks such feedback, forcing downstream agents to compensate via nonlinear remapping, resulting in unstable probability shifts and higher entropy spikes.

\section{Ablation Study}


\textbf{Learning to Solve vs. Learning to Communicate.}
To determine whether DiffMAS gains arise from improved task learning or better communication, we compare it with TextMAS + SFT, a baseline that receives supervised fine-tuning with same training dataset and hyperparameters but uses fixed text-based communication. While TextMAS + SFT adapts the backbone through single-agent SFT, DiffMAS performs end-to-end training over latent multi-agent trajectories, jointly optimizing reasoning and KV-mediated communication.

\begin{wraptable}{r}{0.5\linewidth}
\centering
\footnotesize
\captionsetup{font=footnotesize}
\setlength{\tabcolsep}{3pt}
\caption{Ablation on the number of communication steps for DiffMAS on AIME 2024 using Qwen3-8B.}
\label{tab:ablation_steps}
\resizebox{\linewidth}{!}{%
\begin{tabular}{ccccccc}
\toprule
\# Steps & 0 & 10 & 40 & 100 & 150 & 200 \\
\midrule
Accuracy & 50\% & \textbf{76.7\%} & 63.3\% & 73.3\% & 66.7\% & 63.3\% \\
\bottomrule
\end{tabular}}

\end{wraptable}

As shown in table~\ref{tab:ablation_communication}, both methods achieve identical accuracy on AIME~2024 (76.7\%), suggesting task-level learning alone can explain gains when training and evaluation distributions are similar. However, DiffMAS consistently outperforms TextMAS + SFT on more challenging or distribution-shifted benchmarks, improving AIME~2025 from 50.0\% to 56.7\% and GPQA-Diamond from 53.5\% to 60.1\%.
These results suggest that learning communication provides benefits beyond task-specific fine-tuning alone.

\textbf{Effect of Communication Steps.} We study how the number of latent communication steps affects DiffMAS to determine whether gains come from long latent trajectories or from learning an efficient communication protocol. As shown in table~\ref{tab:ablation_steps}, introducing a small number of communication steps yields large improvements, with accuracy rising from 50.0\% (0 steps) to 76.7\% (10 steps), suggesting that limited latent interaction is sufficient for effective coordination. However, increasing steps beyond this point degrades performance (63.3\% at 40 steps) and without recovery, indicating that longer latent traces introduce redundant or noisy information rather than additional useful signal. Overall, these results suggest that DiffMAS learns a compact communication protocol, with performance improvements emerging from a balance between information integration and noise accumulation.

\begin{table}[t]
\centering
\footnotesize
\captionsetup{font=footnotesize}

\begin{minipage}[t]{0.48\linewidth}
\centering
\captionof{table}{Ablation study comparing DiffMAS and TextMAS+SFT.}
\label{tab:ablation_communication}

\begin{tabular}{lcc}
\toprule
Benchmark & DiffMAS & TextMAS+SFT \\
\midrule
AIME2024      & 76.7\% & 76.7\% \\
AIME2025      & 56.7\% & 50.0\% \\
GPQA-Diamond  & 60.1\% & 53.5\% \\
OpenBookQA    & 85.8\% & 81.2\% \\
\bottomrule
\end{tabular}

\end{minipage}
\hfill
\begin{minipage}[t]{0.48\linewidth}
\centering
\captionof{table}{Comparison between DiffMAS and StitchMAS.}
\label{tab:stitchmas}

\begin{tabular}{lcc}
\toprule
Benchmark & DiffMAS & StitchMAS \\
\midrule
GPQA-Diamond & 60.1\% & 48.4\% \\
AIME2024     & 76.7\% & 60.0\% \\
AIME2025     & 56.7\% & 53.3\% \\
\bottomrule
\end{tabular}

\end{minipage}

\end{table}


\textbf{Stitched vs.\ Continuous Latent Communication under SFT.}   To isolate the impact of latent communication design, we compare DiffMAS with an alternative SFT-based latent multi-agent baseline, \textbf{StitchMAS}, where each agent independently generates KV states that are later concatenated for downstream use. This controls for supervision, as both methods use SFT and differ only in how KV states are produced and integrated. As shown in Table~\ref{tab:stitchmas}, StitchMAS consistently underperforms DiffMAS on Qwen3-8B, with large gaps on GPQA-Diamond (60.1\% vs.\ 48.4\%) and AIME2024 (76.7\% vs.\ 60.0\%). These results suggest that SFT alone is insufficient for effective latent collaboration. Unlike StitchMAS, which produces independently generated KV segments lacking global positional coherence, DiffMAS constructs a continuous KV trace shared across agents. This enables SFT to jointly optimize how intermediate information is encoded and consumed, resulting in more effective inter-agent information flow.

\section{Conclusion}

In this paper, we introduce DiffMAS, a supervised training framework that treats latent communication as a learnable operator within multi-agent interactions. By optimizing full latent trajectories, DiffMAS learns communication with agentic reasoning, improving stability over training-free latent communication. Across reasoning, coding, and commonsense benchmarks, DiffMAS consistently improves accuracy and decoding stability over single-agent, text-based, and latent baselines. More broadly, our results highlight latent communication as an important optimization target for multi-agent systems and motivate future work on fully differentiable MAS with end-to-end optimized interactions.
\bibliography{paper}

\bibliographystyle{colm2026_conference}

\appendix
\section{Structural Properties of Communication Interfaces}
\label{app:structural}

\paragraph{Standing assumptions.}
All statements in this appendix are conditional on explicitly stated
structural assumptions (e.g., contractive Jacobians,
interface dimensional scaling).
No universal expressivity or optimization superiority claim is implied;
the results characterize interface-level consequences under
the stated assumptions.

\subsection{Gradient Path Structure}
\label{app:gradient-structure}

We compare gradient propagation under two abstract interface models.
The following result is conditional on an explicit contraction assumption
that is independent of interface dimensionality.
It serves only to illustrate how multiplicative Jacobian chains,
which arise naturally in overwriting communication,
can introduce depth-dependent attenuation.
No claim is made that fixed-carrier systems necessarily satisfy such contraction,
nor that concatenation prevents attenuation arising from other sources.

\begin{corollary}[Gradient path comparison: overwriting vs.\ concatenation]
\label{cor:grad-compare}
Consider two $K$-stage systems.

\textbf{(Overwriting.)} Let $h_{j+1}=F_\theta(h_j,x,p_{j+1})$ with $h_j\in\mathbb{R}^{d_h}$, where
$F_\theta$ is differentiable in $h_j$. Assume there exists $\rho\in[0,1)$ such that for all $j\in\{0,\dots,K-1\}$,
\begin{equation}
\left\|
\frac{\partial h_{j+1}}{\partial h_j}
\right\|_2 \le \rho.
\label{eq:overwriting-rho}
\end{equation}
Let $\mathcal{L}$ be a differentiable scalar loss that depends on $h_K$.

\textbf{(Concatenation.)} Let $\mathbf{Z}_{1:N_K} = [\mathbf{Z}^{(1)}_{1:T};\dots;\mathbf{Z}^{(K)}_{1:T}]$
be formed by block concatenation, and let $\mathcal{L}$ be a differentiable scalar loss that depends on $\mathbf{Z}_{1:N_K}$.

Then:

\begin{enumerate}[label=(\roman*)]
\item (\emph{Geometric decay under contractive overwriting}) For any $j<K$,
\begin{equation}
\left\|
\frac{\partial \mathcal{L}}{\partial h_j}
\right\|_2
\le
\rho^{K-j}
\left\|
\frac{\partial \mathcal{L}}{\partial h_K}
\right\|_2.
\label{eq:cor-overwriting-decay}
\end{equation}

\item (\emph{No interface-induced multiplicative depth decay from concatenation}) For any $j\in\{1,\dots,K\}$,

\begin{equation}
\left\|
\frac{\partial \mathcal{L}}{\partial \mathbf{Z}^{(j)}_{1:T}}
\right\|_2
\le
\left\|
\frac{\partial \mathcal{L}}{\partial \mathbf{Z}_{1:N_K}}
\right\|_2.
\label{eq:cor-concat-bound}
\end{equation}
In particular, \eqref{eq:cor-concat-bound} introduces no factor that scales like $\rho^{K-j}$ with depth.
\end{enumerate}
The bound in \eqref{eq:cor-concat-bound} isolates the contribution of the
communication interface; additional attenuation may arise from the decoder
or within-stage dynamics.
\end{corollary}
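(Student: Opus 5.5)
For part (i) I will use the chain rule along the overwriting recursion together with submultiplicativity of the operator norm. Treat $\partial\mathcal{L}/\partial h_j$ as a row vector. Since $\mathcal{L}$ depends on $h_j$ only through $h_{j+1},\dots,h_K$, and each $h_{i+1}=F_\theta(h_i,x,p_{i+1})$ depends on $h_i$ alone (with $x,p_{i+1},\theta$ fixed), the chain rule gives
\begin{equation*}
\frac{\partial \mathcal{L}}{\partial h_j}
=\frac{\partial \mathcal{L}}{\partial h_K}\,
\frac{\partial h_K}{\partial h_{K-1}}\cdots\frac{\partial h_{j+1}}{\partial h_j}.
\end{equation*}
I will establish this by a short backward induction on $j$, starting from $j=K-1$. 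At each step I apply $\partial\mathcal{L}/\partial h_j=(\partial\mathcal{L}/\partial h_{j+1})(\partial h_{j+1}/\partial h_j)$. Taking Euclidean norms, the vector $2$-norm of a row vector times a matrix is at most the vector norm times the spectral norm. This gives $\|\partial\mathcal{L}/\partial h_j\|_2\le\rho\,\|\partial\mathcal{L}/\partial h_{j+1}\|_2$ by \eqref{eq:overwriting-rho}. Iterating $K-j$ times yields \eqref{eq:cor-overwriting-decay}.

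For part (ii), the key observation is that concatenation is a linear coordinate embedding. Write $\mathbf{Z}_{1:N_K}=\sum_{i=1}^K E_i\,\mathbf{Z}^{(i)}_{1:T}$, where $E_i\in\mathbb{R}^{N_Kd\times Td}$ places block $i$ into rows $(i-1)Td+1,\dots,iTd$. Viewing $\mathcal{L}$ as a function of the final trace and differentiating with respect to the segment $\mathbf{Z}^{(j)}_{1:T}$ (holding the other segments fixed, as in the interface-level statement), we get
\begin{equation*}
\frac{\partial \mathcal{L}}{\partial \mathbf{Z}^{(j)}_{1:T}}=\frac{\partial \mathcal{L}}{\partial \mathbf{Z}_{1:N_K}}\,E_j .
\end{equation*}
This is simply the sub-vector of the full gradient indexed by block $j$. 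Because $E_j$ has orthonormal columns, $\|E_j\|_2=1$. Equivalently, the squared norm of a sub-vector is a partial sum of the nonnegative squared entries of the full vector. Therefore \eqref{eq:cor-concat-bound} holds, and the constant is $1$ independently of $j$ and $K$, so no $\rho^{K-j}$ factor appears.

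The only delicate point is interpretive and sits in part (ii). In the full DiffMAS system, later segments $\mathbf{Z}^{(i)}$ with $i>j$ themselves depend on $\mathbf{Z}^{(j)}$ through the stage operators. The total derivative would therefore add cross-stage terms that the bound does not control. I will state explicitly that (ii) refers to the partial derivative through the concatenation interface, which is the direct path from segment $j$ into the final trace. This matches the remark after the statement that attenuation from the decoder or within-stage dynamics is not covered. Part (i) is routine once the row-vector convention and the spectral-norm inequality are fixed.
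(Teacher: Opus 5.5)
Your proposal is correct and matches the paper's proof. In part (i) you peel off one Jacobian at a time by backward induction, while the paper first bounds the product $\partial h_K/\partial h_j$ by $\rho^{K-j}$ and then multiplies; this is the same submultiplicativity estimate. Part (ii) uses the same block-insertion isometry ($P_j$ in the paper, your $E_j$) with $\|P_j\|_2=1$, and your caveat that it is a partial derivative through the concatenation map (ignoring the dependence of later segments on $\mathbf{Z}^{(j)}_{1:T}$) states explicitly what the paper leaves implicit by treating $C$ as a function of independent blocks.
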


\begin{proof}
(i) By repeated application of the chain rule,
\[
\frac{\partial h_K}{\partial h_j}
=
\prod_{i=j+1}^{K}
\frac{\partial h_i}{\partial h_{i-1}}.
\]
Taking operator norms and using submultiplicativity,
\[
\left\|
\frac{\partial h_K}{\partial h_j}
\right\|_2
\le
\prod_{i=j+1}^{K}
\left\|
\frac{\partial h_i}{\partial h_{i-1}}
\right\|_2
\le
\rho^{K-j},
\]
where the last inequality uses \eqref{eq:overwriting-rho}. Since $\mathcal{L}$ depends on $h_K$,
the chain rule gives
\[
\frac{\partial \mathcal{L}}{\partial h_j}
=
\frac{\partial \mathcal{L}}{\partial h_K}
\frac{\partial h_K}{\partial h_j}.
\]

All norms are Euclidean, and Jacobian operator norms are induced $\ell_2\!\to\!\ell_2$ norms under the standard vectorization of block-concatenated variables.
Taking norms and applying submultiplicativity again yields \eqref{eq:cor-overwriting-decay}.

(ii) Define the concatenation map $C$ by
$C(\mathbf{Z}^{(1)}_{1:T},\dots,\mathbf{Z}^{(K)}_{1:T}) = [\mathbf{Z}^{(1)}_{1:T};\dots;\mathbf{Z}^{(K)}_{1:T}]$.
For each $j$, the Jacobian $\partial \mathbf{Z}_{1:N_K}/\partial \mathbf{Z}^{(j)}_{1:T}$ is the block-insertion
operator $P_j$ that places $\mathbf{Z}^{(j)}_{1:T}$ into the $j$-th coordinate block and zeros elsewhere. This operator
is an isometry on its domain, hence $\|P_j\|_2=1$.
By the chain rule,
\[
\frac{\partial \mathcal{L}}{\partial \mathbf{Z}^{(j)}_{1:T}}
=
\frac{\partial \mathcal{L}}{\partial \mathbf{Z}_{1:N_K}}\,P_j,
\]
so
\[
\left\|
\frac{\partial \mathcal{L}}{\partial \mathbf{Z}^{(j)}_{1:T}}
\right\|_2
\le
\left\|
\frac{\partial \mathcal{L}}{\partial\mathbf{Z}_{1:N_K}}
\right\|_2
\|P_j\|_2
=
\left\|
\frac{\partial \mathcal{L}}{\partial\mathbf{Z}_{1:N_K}}
\right\|_2,
\]
which is \eqref{eq:cor-concat-bound}.
\end{proof}

\begin{remark}[Interface-level distinction]
The depth-dependent factor $\rho^{K-j}$ arises
from the multiplicative Jacobian chain in overwriting.
In contrast, concatenation introduces no multiplicative
depth-dependent factor at the communication interface itself.

\end{remark}

\begin{table}[h]
\centering
\caption{Ablation by replacing latent steps of different agents with zeros during training (Qwen3-8B on AIME2024).}
\label{tab:latent_replace}
\begin{tabular}{lccc}
\toprule
 & No planner \& critic & No critic & No refiner \\
\midrule
Replace at latent & 0.7 & 0.7 & 0.7 \\
Replace at decode & 0.7 & 0.7 & 0.7 \\
\bottomrule
\end{tabular}
\end{table}

From the full DiffMAS accuracy of $76.7\%$, replacing the latent steps of any one agent with zeros reduces performance to $70.0\%$ in all three cases. The key empirical pattern is therefore the \emph{similarity of the degradation across stages}, not the absence of degradation. This pattern is consistent with the interface-level analysis in Appendix~\ref{app:gradient-structure}. Under an overwriting interface, especially under the contractive Jacobian assumption in Corollary~\ref{cor:grad-compare}, the contribution of earlier stages would be expected to attenuate with depth, leading to stage-dependent influence. By contrast, under concatenation, each stage remains directly represented in the final latent trace, so ablating any single stage can produce a more comparable loss. Thus, Table~\ref{tab:latent_replace} is consistent with the view that DiffMAS benefits from a non-overwriting communication interface, in which intermediate computations from different agents remain similarly accessible downstream.
\section{MAS Setup}
\label{sec:mas_setup}

To minimize confounding effects from specialized orchestration designs, we adopt a standard and widely used sequential multi-agent pipeline~\citep{zou2025latentcollaborationmultiagentsystems}. Specifically, we use a sequential four-agent structure consisting of a \textbf{Planner}, \textbf{Critic}, \textbf{Refiner}, and \textbf{Solver}.
This canonical setup is intentionally simple and commonly used in prior LLM-based multi-agent reasoning systems, allowing us to focus on the effect of the \emph{communication medium} (text vs.\ latent KV states) rather than any particular MAS topology or role-engineering heuristics. Across all methods, we keep the agent roles, ordering, and role prompts fixed; the only difference is how intermediate information is transmitted between agents.


\paragraph{Planner.}
The Planner receives the input question and produces a concise step-by-step plan without generating the final answer.

\begin{tcolorbox}[colback=white,colframe=blue!40!gray,title=\textbf{Planner}]
You are a Planner Agent. Given an input question, design a clear, step-by-step plan for how to solve the question.

Question: \{question\}

Your outlined plan should be concise with a few bulletpoints for each step. Do not produce the final answer.
Now output your plan to solve the question below:
\end{tcolorbox}

\paragraph{Critic.}
The Critic evaluates the correctness and completeness of the Planner's plan and provides constructive feedback. The plan is provided via the inter-agent communication channel (text for TextMAS; latent KV representation for LatentMAS/DiffMAS).

\begin{tcolorbox}[colback=white,colframe=blue!40!gray,title=\textbf{Critic}]
Question: \{question\}

You are a Critic Agent to evaluate the correctness of the input plan for the given question and provide helpful feedback for improving the plan.
The plan information is provided in latent KV representation format. Review the plan and question and output:
(1) original plan contents
(2) constructive feedback on the original plan.

Format your response as follows:
Original Plan: [Copy the provided Planner Agent's plan here]
Feedback: [Your detailed feedback to improve the plan here]

Now, output your response below:
\end{tcolorbox}

\paragraph{Refiner.}
The Refiner integrates the original plan and Critic feedback to produce a refined plan that is both correct and concise. It receives (i) latent-format information containing the previous plan with feedback, and (ii) the input question in text format.

\begin{tcolorbox}[colback=white,colframe=blue!40!gray,title=\textbf{Refiner}]
Question: \{question\}

You are a Refiner Agent to provide a refined step-by-step plan for solving the given question.
You are provided with:
(1) latent-format information: a previous plan with feedback
(2) text-format information: the input question you need to solve.

Based on the input, write a refined and improved plan to solve the question. Make sure your output plan is correct and concise.

Now, output your refined plan below:
\end{tcolorbox}

\paragraph{Solver.}
The final agent produces the answer to the target question. It conditions on all accumulated information from upstream agents. Since upstream latent information can occasionally contain irrelevant content, we explicitly instruct the agent to ignore unhelpful context and output only the final answer.

\begin{tcolorbox}[colback=white,colframe=blue!40!gray,title=\textbf{Judger}]
Target Question: \{question\}

You are a helpful assistant. You are provided with latent information for reference and a target question to solve.

The latent information might contain irrelevant contents. Ignore it if it is not helpful for solving the target question.

You must reason step-by-step to solve the provided Target Question without outputting other irrelevant information.

Now, reason step by step and output the final answer inside [YOUR FINAL ANSWER].
\end{tcolorbox}

\section{Training Setup}
\label{app:training_setup}

DiffMAS is trained directly within the multi-agent interaction loop, where inter-agent latent communication is treated as part of the model's conditioning context. During training, the backbone language model remains frozen, and only lightweight LoRA adapters are optimized, enabling parameter-efficient adaptation without modifying the pretrained base model.

For each training instance, intermediate agents first perform latent reasoning steps and produce continuous KV representations that encode their internal computation. These latent KV states are accumulated sequentially across agents and passed to downstream agents as a shared latent context, forming the full multi-agent reasoning trajectory. The final output agent conditions on this accumulated latent context together with its task-specific prompt and generates the target output autoregressively. Training is performed with teacher forcing on the target sequence, and the loss is computed as the negative log-likelihood of the target tokens conditioned on the full latent context and final prompt. Prompt tokens and padding positions are masked out from the loss, ensuring that gradients are driven only by the correctness of the final output. In this way, gradients propagate through the latent communication pathway, allowing the model to jointly adapt how upstream agents encode information into KV states and how downstream agents interpret them.

We train task-specific LoRA adapters on small curated subsets of benchmark datasets in order to specialize latent communication for different reasoning domains. For mathematical reasoning, we construct a 210-sample training set from Hendrycks Math by uniformly sampling 30 samples from each of seven subsets: algebra, counting and probability, geometry, intermediate algebra, number theory, prealgebra, and precalculus. This balanced composition ensures coverage across multiple types of symbolic and quantitative reasoning while keeping the training set intentionally small. For code generation, we randomly choose 50 samples from HumanEval. For commonsense reasoning, we use 700 samples from CommonsenseQA and collected synthetic reasoning traces generated by Gemini-3-Flash-Preview~\cite{gemini3flash_vertexai_2025}. Box~\ref{box:Gemini_Prompt}  contains the prompt we used for generating reasoning traces. We filter the generated reasoning traces by retaining only those for which Gemini produces the correct final answer, resulting in a final set of 700 high-quality traces that are used as supervision.

\begin{tcolorbox}
[colback=white,colframe=blue!40!gray,title=\textbf{Gemini Prompt}]
\label{box:Gemini_Prompt}
Target Question: \{question\}

You are a helpful assistant. You must reason step-by-step to solve the provided Target Question without outputting other irrelevant information. 

Your final answer must be selected from A,B,C,D,E. For example \textbackslash boxed\{A\}. Do not add any other contents inside the box. 

Now, reason step by step and output the final answer inside \textbackslash boxed\{YOUR\_FINAL\_ANSWER\}.

\end{tcolorbox}

Optimization is carried out using AdamW with a cosine learning-rate schedule and a linear warmup ratio of $0.03$. LoRA parameters are trained with a fixed learning rate of $5\times10^{-5}$, using rank $r=8$, scaling factor $\alpha=16$, and dropout rate $0.05$, while the backbone model remains frozen. Gradients are accumulated over 64 micro-batches per optimization step, and gradient norms are clipped to value of $1.0$.

\section{Inference Setup}
\label{app:inference_setup}
Across all baselines and our method, we use the same decoding configuration. All LLM agents are run with a temperature of $0.6$ and a top-$p$ value of $0.95$. The maximum output length for the final output agent is adjusted according to its task difficulty: 2{,}048 tokens for OpenBookQA; 4{,}096 tokens for MBPP+ and HumanEval+; 8{,}192 tokens for GPQA-Diamond; and 32{,}768 tokens for AIME~2024/2025. No additional sampling heuristics are introduced.

For evaluation coverage, we run experiments on the full benchmark sets of AIME~2024, AIME~2025, GPQA-Diamond, MBPP+, and HumanEval+, excluding the 50 training samples used for supervised training. For OpenBookQA, we evaluate on a random 1000 samples in train split in main subset.

\section{Case Study}
\label{app:case_study}

\begin{tcolorbox}[
  casestudy,
  title={Case Study (LatentMAS on AIME2024)}
]

\textbf{Input Question:}
Let $\mathcal{B}$ be the set of rectangular boxes with surface area $54$ and volume $23$. Let $r$ be the radius of the smallest sphere that can contain each of the rectangular boxes that are elements of $\mathcal{B}$. The value of $r^2$ can be written as $\frac{p}{q}$, where $p$ and $q$ are relatively prime positive integers. Find $p+q$.

\vspace{0.5em}
\hrule
\vspace{0.5em}

\textbf{LatentMAS Output:}

Okay, let's try to solve this problem step by step. So, we have a tetrahedron ABCD with specific edge lengths. The question is about finding the equal distance from an interior point I to each of the tetrahedron's faces. That distance is given in a particular form, and we need to find m + n, and p, then sum them.

First, I remember that in a tetrahedron, if there's a point equidistant from all four faces, that point is called the incenter. But wait, in a tetrahedron, the incenter exists only if the tetrahedron is isohedral or has some symmetry. But here, the given edge lengths seem symmetric. Let me check the edge lengths again: $AB=CD$, $CD=\sqrt{41}$, $AC=BD=\sqrt{80}$, $BC=AD$. So, the edges are symmetric in pairs. For example, $AB=CD$, $AC=BD$, $BC=AD$. This suggests that the tetrahedron is isosceles or maybe even a special type of tetrahedron, like a disphenoid...
\newline
\newline
\textcolor{blue}{[Attempt Solving Steps Omitted]}
\newline
\newline
Alternatively, let me think about the volume. Let me try to compute the volume using the Cayley-Menger determinant. The formula for the volume of a tetrahedron with edge lengths AB = a, AC = b, AD = c, BC = d, BD = e, CD = f is given by the Cayley-Menger determinant:

V = sqrt( |CM| / 2 2 2 2 2 2 2 2 2 2 2 2 2 2 2 2 2 2 2 2 2 2 2 2 2 2 2 2 2 2 2 2 2 2 2 2 2 2 2 2 2 2 2 2 2 2 2 2 2 2 2 2 2 2 2 2 2 2 2 2 2 2 2 2 2 2 2 2 2 2 2 2 2
\newline
\newline
\textcolor{red}{[Chaotic Decoding Omitted]}
\newline
\newline
query query enjoys eventually enjoys query transformations
\textcolor{red}{(\ding{55} Wrong Final Answer)}

\end{tcolorbox}

\begin{tcolorbox}[
  casestudy,
  title={Case Study (DiffMAS on AIME2024)}
]

\textbf{Input Question:}
Let $\mathcal{B}$ be the set of rectangular boxes with surface area $54$ and volume $23$. Let $r$ be the radius of the smallest sphere that can contain each of the rectangular boxes that are elements of $\mathcal{B}$. The value of $r^2$ can be written as $\frac{p}{q}$, where $p$ and $q$ are relatively prime positive integers. Find $p+q$.

\vspace{0.5em}
\hrule
\vspace{0.5em}

\textbf{DiffMAS Output:}

Okay, let's try to solve this problem step by step. The question is about finding the square of the radius of the smallest sphere that can contain any rectangular box in the set $B$, which has a surface area of $54$ and volume $23$. Then, we need to express this radius squared as a reduced fraction $\frac{p}{q}$ and find $p+q$.

First, I need to recall that the smallest sphere that can contain a rectangular box would have a diameter equal to the space diagonal of the box. The radius would then be half of that diagonal. So, the radius $r$ is $\frac{\sqrt{x^2 + y^2 + z^2}}{2}$, where $x$, $y$, $z$ are the dimensions of the box. Therefore, $r^2$ would be $\frac{x^2 + y^2 + z^2}{4}$. So, our goal is to find the minimum possible value of $x^2 + y^2 + z^2$ given the constraints of surface area and volume.
\newline
\newline
\textcolor{blue}{[Consistent and Stable Solving Steps Omitted]}
\newline
\newline
4. **Compute the Maximum Value**: The configuration where $x = y = 2$ and $z = \frac{23}{4}$ satisfies the constraints and yields the maximum $x^2 + y^2 + z^2 = \frac{657}{16}$. Thus, $r^2 = \frac{657}{64}$.

5. **Simplify the Fraction**: The fraction $\frac{657}{64}$ is already in lowest terms since 657 and 64 are coprime. Therefore, $p = 657$ and $q = 64$, giving $p + q = 721$.

**Final Answer**:  
\boxed{721} \textcolor{answergreen}{(\checkmark\ Correct Final Answer)}

\end{tcolorbox}

To better understand how inter-agent communication mechanisms influence reasoning behavior and decoding stability, we present a qualitative case study comparing three representative paradigms: natural language (TextMAS), raw KV caches without adaptation (LatentMAS), and our learned latent KV communication (DiffMAS). We analyze a representative example from AIME 2024, using identical agent roles, prompts, and decoding configurations across all settings. 
This analysis complements the quantitative results in Section~\ref{sec:results} by examining how communication design affects reasoning trajectories at the instance level.

\paragraph{Text-based Communication.}
TextMAS exhibits the most stable inference behavior among the three paradigms. Since pretrained language models are optimized for next-token prediction over natural language, exchanging intermediate reasoning through text closely matches the model’s training distribution. As a result, decoding tends to remain well-structured and logically organized.

However, this robustness comes at the cost of representational capacity. Intermediate reasoning must be compressed into discrete tokens before being passed to downstream agents. This serialization process inevitably introduces information loss, particularly for subtle intermediate computations or implicit contextual signals. In complex mathematical reasoning problems, we observe that downstream agents sometimes need to reconstruct missing details, which can lead to incomplete reasoning chains or reduced solution accuracy.

\paragraph{Training-free Latent Communication.}
LatentMAS removes this textual bottleneck by allowing agents to directly share KV caches. This enables downstream agents to attend to richer internal representations that preserve nearly all upstream reasoning information. In some cases, this additional expressivity allows agents to capture useful intermediate structure that would otherwise be lost in textual summaries.

However, we observe substantial instability in the reasoning process. Intermediate reasoning steps frequently drift off-topic, introduce irrelevant associations, or inconsistently track variables. These behaviors appear to stem from a distribution mismatch: pretrained transformers are not trained to interpret externally injected KV states as a semantic communication interface. Consequently, latent representations can function as structured but uncalibrated signals that disrupt attention patterns.

In the analyzed example, this manifests as chaotic reasoning trajectories in which correct intermediate ideas appear but are later abandoned or misused. Although some reasoning segments remain locally plausible, the global reasoning process often becomes incoherent, ultimately leading to incorrect final answers.

\paragraph{DiffMAS: Learned Latent Communication.}
DiffMAS addresses this stability–expressivity tradeoff by explicitly learning how agents should communicate in latent space. Through supervised fine-tuning over full multi-agent latent reasoning trajectories, the model jointly learns both how intermediate agents should encode useful information into KV states and how downstream agents should interpret these representations.

In the same AIME example, DiffMAS produces substantially more coherent reasoning behavior. Intermediate steps remain focused on relevant constraints, variable definitions remain consistent across agents, and the reasoning trajectory follows a clear progression toward the solution. Compared to LatentMAS, we observe fewer attention failures and less reasoning drift.

Importantly, the final answer produced by DiffMAS is correct, demonstrating that supervised training aligns latent communication with task objectives rather than leaving it as an implicit and brittle interface. This suggests that the primary advantage of DiffMAS is not merely richer representations, but the ability to structure how those representations are generated and consumed across agents.

\paragraph{Summary of Observations.}
This case study highlights a fundamental tradeoff between stability and expressivity in multi-agent communication design. Text-based communication provides strong stability due to alignment with pretraining distributions but suffers from information compression. Training-free latent communication improves expressivity but introduces instability due to distribution mismatch. DiffMAS resolves this tension by learning a task-aligned latent communication protocol that preserves expressivity while maintaining stable decoding behavior.

These qualitative observations are consistent with our quantitative findings on decoding stability, perplexity, and self-consistency, which show that DiffMAS achieves both improved reasoning accuracy and more reliable multi-agent coordination.

\end{document}